\newtheorem{theorem}{Theorem}
\newtheorem{lemma}{Lemma}
\newcommand{\eat}[1]{}
\title{Human Mobility Modeling During the COVID-19 Pandemic \\via Deep Graph Diffusion Infomax}
\author {
    Yang Liu\textsuperscript{\rm 2},
    Yu Rong\textsuperscript{\rm 3},
    Zhuoning Guo\textsuperscript{\rm 1},
    Nuo Chen\textsuperscript{\rm 1},
    Tingyang Xu\textsuperscript{\rm 3},
    Fugee Tsung\textsuperscript{\rm 1, \rm 2},
    Jia Li\textsuperscript{\rm 1, \rm 2}\thanks{Corresponding author.}
}
\begin{document}

\maketitle

\begin{abstract}
Non-Pharmaceutical Interventions (NPIs), such as social gathering restrictions, have shown effectiveness to slow the transmission of COVID-19 by reducing the contact of people. To support policy-makers, multiple studies have first modeled human mobility via macro indicators (e.g., average daily travel distance) and then studied the effectiveness of NPIs. In this work, we focus on mobility modeling and, from a micro perspective, aim to predict locations that will be visited by COVID-19 cases. Since NPIs generally cause economic and societal loss, such a micro perspective prediction benefits governments when they design and evaluate them. However, in real-world situations, strict privacy data protection regulations result in severe data sparsity problems (i.e., limited case and location information).
To address these challenges, we formulate the micro perspective mobility modeling into
computing the relevance score between a diffusion and a location, conditional on a geometric graph. we propose a model named \textit{Deep Graph Diffusion Infomax} (DGDI), which jointly models variables including a geometric graph, a set of diffusions and a set of locations.
To facilitate the research of COVID-19 prediction, we present two benchmarks that contain geometric graphs and location histories of COVID-19 cases. Extensive experiments on the two benchmarks show that DGDI significantly outperforms other competing methods.
\end{abstract}

\section{Introduction}
The COVID-19 pandemic has been the greatest global public health challenge with over 567 million confirmed cases and over 6.3 million deaths as of 27 July 2022\footnote{\url{who.int/publications/m/item/weekly-epidemiological-update-on-covid-19---27-july-2022}}. The outbreak of the COVID-19 not only threatens the public health but also has a devastating impact on economic activity, leading to increased food insecurity, poverty, and socioeconomic inequality~\cite{wang2020addressing,laborde2020covid,clouston2021socioeconomic}. To control the spread of COVID-19, Non-Pharmaceutical Interventions (NPIs), such as social gathering restrictions, school and business closures, and stay-at-home orders, are implemented via changing people's mobility behaviours~\cite{haug2020ranking}. Therefore, multiple studies~\cite{levin2021insights,hu2021big,gibbs2020changing,gibbs2021detecting} have analyzed the human mobility patterns to support policy-makers. They usually aggregate mobility data, provided by commercial companies, such as SafeGraph~\footnote{https://www.safegraph.com/academics} and Google~\footnote{https://www.google.com/covid19/mobility/}, and then compute metrics (e.g., daily average fraction of residents staying at home) to model human mobility. Despite their effectiveness, they only focus on a macro aspect of mobility during COVID-19 and ignore the impact of NPIs on individuals.
Since NPIs slow the virus spread by limiting human mobility, an in-depth understanding of human behaviours during the COVID-19 pandemic will allow decision makers judiciously and promptly implement interventions.

In this work, we strictly follow the real-world scenario that case diffusions (i.e., location visiting histories) can only be observed when they are tested positive, which abides by the privacy data protection regulation of most countries/regions. Under these settings, we aim to answer the question: \emph{will a location be visited by the COVID-19 cases (e.g., confirmed cases, close contacts, and asymptomatic infections) in the near future?} 
Addressing the above problem is challenging due to the \textbf{data sparsity} in two aspects. 

Firstly, due to the privacy policy, the available diffusion data is anonymous, resulting in a limited number of location visiting records of each case. To alleviate such data sparsity, a common solution is to incorporate side information. Thus,
we propose to utilize the geometric information, i.e., a geometric graph is constructed where two locations are connected if they are neighbors. To this end, our problem is formulated into computing the relevance score between a diffusion and a location, conditional on a geometric graph. Following the most recent works~\cite{DBLP:conf/kdd/ChangWLMDPKGRGM21,2020Mobility,DBLP:conf/kdd/SchwabePF21}, we employ the representation learning framework. That is, we use fixed-length embedding vectors to denote locations, diffusions, and graphs respectively. Figure~\ref{fig:example} is an example of the COVID-19 high-risk location prediction, in which a diffusion illustrates a location history of the COVID-19 case, and the geometric graph reflects the distance between locations. 

Secondly, most locations have a small number of appearances. Generally, in real-world diffusions, common locations (head nodes) can have sufficient appearances while some uncommon locations (tail nodes) can be underrepresented by limited appearances~\cite{anderson2006long}. This imbalance poses great challenge to learning unbiased representations, making the learned representations easily dominated by head nodes. To address this limitation, we extend the idea of Mutual Information (MI) maximization ~\cite{DBLP:journals/corr/abs-1807-03748} to the COVID-19 mobility prediction, which has been proven to be effective in learning high-quality representations with skew distributions~\cite{wang2021contrastive}.
\begin{figure*}
\begin{center}
\includegraphics [width=0.95\textwidth]{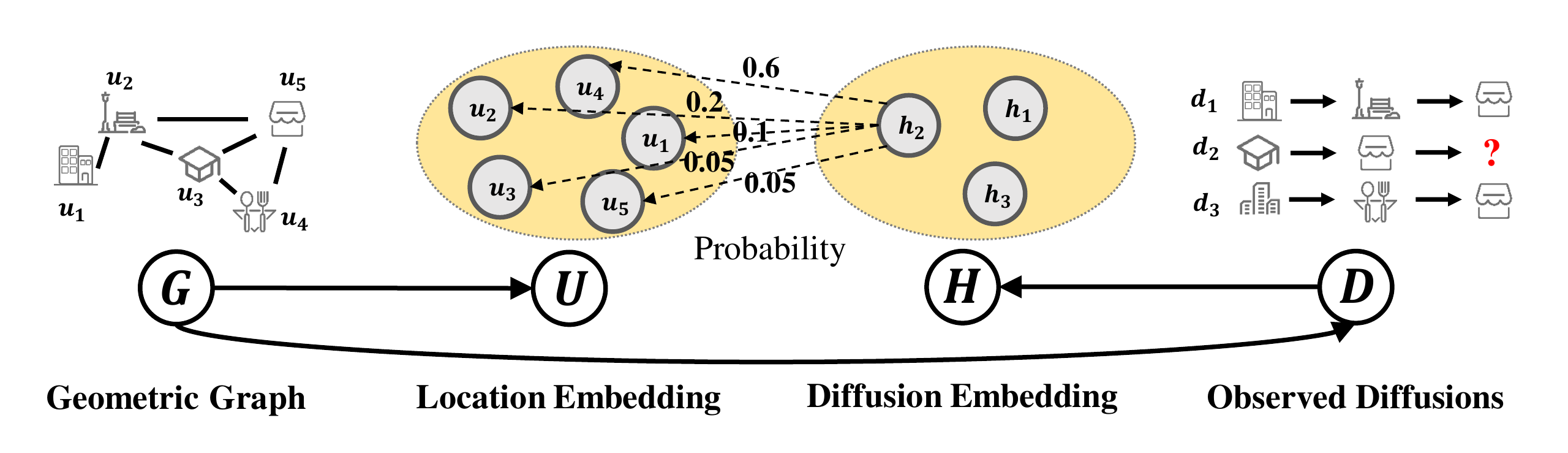}
\end{center}
\caption{An example of the COVID-19 mobility prediction. The geometric graph records the distance information (two points are connected if they are within a Euclidean distance). The diffusion illustrates a location history of the COVID-19 case.}
\label{fig:example}
\end{figure*}
Unlike previous graph MI maximization methods~\cite{DBLP:conf/iclr/VelickovicFHLBH19} which only have single input (i.e., graph) and output (i.e., location embedding), our formulation involves multiple variables including two inputs (i.e., the geometric graph and diffusions) and two representations (i.e., location and diffusion representations). It is difficult to enforce consistency among these four variables within the framework of MI maximization. In this work, we present \textbf{Deep Graph Diffusion Infomax (DGDI)}, to tackle the above challenges. DGDI is defined between two joint distributions: the joint distribution of the graph input and location representations and that of diffusions and diffusion representations. \eat{Intuitively, optimizing DGDI allows the model to explicitly consider the dependence between graph and cascade sub-systems.} Moreover, our theoretical derivations show that DGDI can be lower bounded by a linear combination of two univariate MI: the univariate MI between the graph input and diffusion representations and the univariate MI between diffusion representations and location representations. Upon this decomposition, the univariate MI can be easily computed via InfoNCE~\cite{DBLP:journals/corr/abs-1807-03748} or Deep Graph Infomax (DGI)~\cite{DBLP:conf/iclr/VelickovicFHLBH19}. We then derive the visit probability by a similarity function of diffusion and location representations.

\paragraph{\textbf{Social impact}} Given the existence of immune escape (i.e., vaccines may fail to protect people)~\cite{zhang2022significant}, especially for some COVID-19 variants (e.g., Omicron), and the specific drug for COVID-19 is still in an early stage, non-pharmaceutical interventions are still practical and effective to combat COVID-19. However, since these interventions may cause substantial economic and societal loss while hurt individuals' mental health and social security, effectively quantifying human mobility during the COVID-19 pandemic to assess NPI effectiveness is necessary to balance costs. Besides, human mobility models can benefit the control of other epidemics as well, such as influenza~\cite{venkatramanan2021forecasting} and ebola~\cite{peak2018population}.

\section{Related Work}
In this section, we review human mobility and graph contrastive learning papers that are related to our work.

\subsection{\textbf{Human Mobility Modeling}}
Non-pharmaceutical interventions have been one of the most effective tools to defeat COVID-19 transmission and most NPIs aim to change human mobility to reduce the contact rate.
Outside the Artificial Intelligence (AI) community, plenty of works~\cite{gibbs2020changing,hu2021big,levin2021insights,gibbs2021detecting,lai2020effect,chang2021mobility} have studied the relations between human mobility and NPIs or COVID-19 transmission, which could provide valuable insights for future public health efforts. Most of them focus on computing mobility metrics via aggregating mobility data, e.g., daily average fraction of residents staying at home~\cite{levin2021insights}, daily average number of trips~\cite{hu2021big} and movement flow matrix~\cite{gibbs2020changing}. Nevertheless, these macro-view researches are too simple to model real-world mobility, as they treat individuals without differentiation. Note that in the AI domain, there are analogous tasks such as mobility prediction and diffusion prediction which can be utilized to model individual trajectories. Thus in this work, we fill this gap and derive a mobility prediction method for COVID-19 cases. We believe such an approach can help discover more valuable insights during NPI designs.

Existing mobility prediction methods in AI community can be broadly categorized into: Matrix Factorization (MF)~\cite{DBLP:conf/kdd/LiuFYX13}, Markov Chain (MC)~\cite{DBLP:conf/gis/ZhangCL14,DBLP:conf/pakdd/ChenLY14}, and deep learning models which consist of recurrent models~\cite{DBLP:conf/cikm/YaoZHB17,DBLP:conf/aaai/SunQCLNY20,DBLP:conf/ijcai/LiHWY21}, attention mechanism~\cite{DBLP:conf/www/YuCGLLL20}, and graph neural networks~\cite{DBLP:conf/cikm/LimHNWGWV20}. Diffusion prediction models follow a similar research line, from early simple linear models~\cite{DBLP:conf/kdd/KempeKT03,granovetter1978threshold,DBLP:conf/icdm/BarbieriBM12,DBLP:conf/acml/SaitoKOM09,rong2016model,rong2015happened} to recent deep learning based models~\cite{DBLP:conf/icde/FengCKLLC18,DBLP:conf/icdm/WangZLC17,DBLP:conf/wsdm/SankarZK020,DBLP:conf/sigir/00040LSL0WSYA21,DBLP:conf/ijcai/WangSLGC17,DBLP:conf/cikm/WangCL18,DBLP:conf/icdm/IslamMAPR18,DBLP:journals/tkde/YangSLHLL21} including recurrent neural networks (RNNs) and variational autoencoders (VAEs). 

Compared to diffusion prediction methods, mobility prediction models usually have an additional user representation model learned from users' history trajectories (users generally have more than one trajectory) and user features. Nevertheless, in our task, users have no such rich features but a single trajectory due to privacy protection. In this regard, we follow the settings of diffusion prediction which only rely on the location visiting history and a graph to make predictions.

\subsection{\textbf{Graph Contrastive Learning}}
The recent advance of contrastive learning in computer vision motivates the studies of graph contrastive learning~\cite{DBLP:conf/iclr/VelickovicFHLBH19,DBLP:conf/www/JingPT21,DBLP:conf/www/PengHLZRXH20,DBLP:conf/nips/YouCSCWS20,DBLP:conf/iclr/SunHV020,DBLP:conf/kdd/QiuCDZYDWT20}. DGI~\cite{DBLP:conf/iclr/VelickovicFHLBH19} maximizes the mutual information between node representations and global summary of the graph. HDMI~\cite{DBLP:conf/www/JingPT21} leverages two signals to train the model: the mutual information between location embedding and global summary, and the mutual dependence between location embedding and location attributes. HGMI~\cite{li2022semi} maximizes the mutual information among three variables. To the best of our knowledge, existing work~\cite{DBLP:conf/kdd/0001S21} only explores the advantage of self-supervised learning on COVID-19 Cough Classification, aiming at learning robust representations of respiratory sounds. Our work is the first study that models the mutual information among the graph, nodes and diffusions in COVID-19 diffusion prediction domain. 

\section{Preliminaries}
\subsection{Problem Definition}\label{pdd}
A set of nodes $\bm{V}=\{v_1, v_2, \ldots, v_N\}$ is used to represent real-world locations or areas in COVID-19 transmission. In this work, locations and nodes are used interchangeably. An $N\times N$ adjacency matrix $\bm{A}$ is used to describe a \textbf{geometric graph} $\bm{G}=\{\bm{V},\bm{A}\}$ for nodes in $\bm{V}$.  $\bm{A}_{ij}\in \{0, 1\}$ represents whether there is an edge between nodes $v_i$ and $v_j$ or not, e.g., nodes connected by nearest neighbors in COVID-19 transmission.  
Let a set of \textbf{diffusions} $\bm{D}=\{d_1, d_2, \ldots, d_M\}$ represent the COVID-19 cases, in which a diffusion $d_i$ is an ordered sequence of location visit histories in ascending order of time denoted by $d_i = \{v_{i_1}, v_{i_2}, \ldots, v_{i_k}\}$. The $k^{th}$ visited location of $d_i$ is recorded as $v_{i_k}$.

Given the pair $\{\bm{G},\bm{D}\}$, our problem is defined as learning a prediction model $f(\bm{G},\bm{D})$ to estimate the probability of visiting an unvisited location $v$: $p(v|d_i)$ $\exists\ v \in \bm{V} - \{v_i\}_{k=1}^K$\eat{, where $d_i$ is a known activated location sequence $d_i = \{v_{i_1}, \ldots, v_{i_k}\}$}. 
Due to a large number of potential unvisited locations, we formulate our prediction as an information retrieval problem. For example, for $d_1$ in Figure~\ref{fig:example}, given the observed visited locations (i.e., hotel, park, and mall), the model ranks the probability of visiting unvisited locations (i.e., school, canteen, flat).

\subsection{Framework}
In our problem setting, we have two information sources: a geometric graph $\bm{G}$ and a set of diffusions $\bm{D}$. Our purpose is to learn diffusion representations $\bm{H}=\{h_i\}_{i=1}^M$ and location representations $\bm{U}=\{u_i\}_{i=1}^N$, where $h_i$ denotes the derived diffusion representation for $d_i$ and $u_i$ denotes the derived location representation for location $v_i$. These representations can be retrieved and used to compute a visit likelihood $p(v|d_i)$.

We seek to maximize mutual information for two subsystems: the graph joint distribution $(\bm{G},\bm{U})$ to model the geometric information and the diffusion joint distribution $(\bm{D},\bm{H})$ to model the COVID-19 cases, namely,
\begin{equation}
\arg\max  I((\bm{G},\bm{U});(\bm{D},\bm{H})),
\label{equ.unsuper}
\end{equation}
Intuitively, this target enforce consistency between the geometric information and the COVID-19 cases, which is named \textbf{Deep Graph Diffusion Infomax (DGDI)}.

We consider the geometric graph $\bm{G}$, the location representations $\bm{U}$, the diffusions $\bm{D}$ and the diffusion representations $\bm{H}$ as four random variables and forming a probabilistic graphical model, as illustrated in Figure \ref{fig:example}. As the figure makes clear, the diffusion $d_i$ is assumed to be sampled once from the geometric graph $\bm{G}$; the location representation $u_i$ directly depends on the graph $\bm{G}$; the diffusion representation $\bm{H}$ directly depends on the diffusion $\bm{D}$. As the joint distribution in Eq. \ref{equ.unsuper} makes the data appearances more sparse, we seek to find a marginal distribution lower bound to relieve the data sparsity issue.

Based on the above assumptions, we have the following decomposition theorem to make the computation of DGDI feasible.
\begin{theorem}
    \emph{The graph diffusion mutual information $I((\bm{G},\bm{U});(\bm{D},\bm{H}))$  can be lower bounded by  a sum of univariate mutual information, namely,
    \begin{equation}
    I((\bm{G},\bm{U});(\bm{D},\bm{H})) \geq \frac{1}{4}I(\bm{G};\bm{H}) + \frac{1}{4}I(\bm{H};\bm{U})
    \end{equation}
    here $I(\bm{G};\bm{H})$ is the mutual information between graph input and diffusion representations, $I(\bm{H};\bm{U})$ is the mutual information between diffusion representations and location representations.
}
    \label{thm:submod}
\end{theorem}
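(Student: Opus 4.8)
The plan is to treat the four variables as the Markov structure $\bm{U} \leftarrow \bm{G} \rightarrow \bm{D} \rightarrow \bm{H}$ implied by the stated graphical model, and to produce two symmetric lower bounds on $I((\bm{G},\bm{U});(\bm{D},\bm{H}))$ — one isolating $I(\bm{G};\bm{H})$ and one isolating $I(\bm{H};\bm{U})$ — using only the chain rule for mutual information together with the non-negativity of conditional mutual information. Averaging the two bounds then yields the claim. The only ingredients are standard: monotonicity of mutual information (enlarging either argument cannot decrease it) and $I(\cdot\,;\cdot\mid\cdot)\ge 0$.

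For the first bound I would apply the chain rule to the left argument, writing $I((\bm{G},\bm{U});(\bm{D},\bm{H})) = I(\bm{G};(\bm{D},\bm{H})) + I(\bm{U};(\bm{D},\bm{H})\mid \bm{G})$, and discard the second, non-negative term to obtain $I((\bm{G},\bm{U});(\bm{D},\bm{H})) \ge I(\bm{G};(\bm{D},\bm{H}))$. Expanding the right argument the same way, $I(\bm{G};(\bm{D},\bm{H})) = I(\bm{G};\bm{H}) + I(\bm{G};\bm{D}\mid \bm{H}) \ge I(\bm{G};\bm{H})$, gives the first inequality $I((\bm{G},\bm{U});(\bm{D},\bm{H})) \ge I(\bm{G};\bm{H})$. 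Running the identical argument but discarding $\bm{G}$ first and then $\bm{D}$ gives the symmetric bound $I((\bm{G},\bm{U});(\bm{D},\bm{H})) \ge I(\bm{U};\bm{H})$.

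Averaging the two inequalities yields $I((\bm{G},\bm{U});(\bm{D},\bm{H})) \ge \tfrac12 I(\bm{G};\bm{H}) + \tfrac12 I(\bm{H};\bm{U})$, which is strictly stronger than the stated bound and therefore implies it a fortiori. I do not expect a genuine obstacle here: the derivation is essentially bookkeeping with entropy/MI identities, and the graphical-model assumptions (e.g.\ $\bm{H}\perp(\bm{G},\bm{U})\mid \bm{D}$ and $\bm{U}\perp(\bm{D},\bm{H})\mid \bm{G}$) serve only to interpret or zero out the discarded conditional terms, not to make the inequalities hold. The one point worth scrutiny is the coefficient: since the clean argument already delivers $\tfrac12$, the factor $\tfrac14$ in the statement must come from a looser route — most plausibly a symmetrized split of $I$ into two halves, each bounded by a separate data-processing step along the Markov chain, which would shed an extra factor of two. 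I would either adopt the tighter $\tfrac12$ constant, or, to match the paper exactly, reproduce whichever symmetrization the authors use and verify that the conditional-independence relations above justify each data-processing application.
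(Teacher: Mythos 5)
Your proposal is correct and in fact proves a strictly stronger bound than the theorem states, by a genuinely different (if closely related) route. The paper proceeds via its Lemma~1, $I(R_1;R_2,R_3)\ge\frac{1}{2}\bigl(I(R_1;R_2)+I(R_1;R_3)\bigr)$, applied twice: first to split the right argument, $I((\bm{G},\bm{U});(\bm{D},\bm{H}))\ge\frac{1}{2}I((\bm{G},\bm{U});\bm{H})+\frac{1}{2}I((\bm{G},\bm{U});\bm{D})$, then to split the left argument of the first term, so the two factors of $\frac{1}{2}$ compound into the $\frac{1}{4}$ coefficients; the leftover $\frac{1}{2}I((\bm{G},\bm{U});\bm{D})$ is then expanded by the chain rule, the term $I(\bm{U};\bm{D}\mid\bm{G})$ is zeroed by the graphical-model independence, and $\frac{1}{2}I(\bm{G};\bm{D})$ is discarded as a non-negative constant of the data. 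Your diagnosis of where the $\frac{1}{4}$ comes from --- a symmetrized split applied along the way --- is exactly right. By contrast, your deferred averaging derives the two full-strength monotonicity bounds $I((\bm{G},\bm{U});(\bm{D},\bm{H}))\ge I(\bm{G};\bm{H})$ and $I((\bm{G},\bm{U});(\bm{D},\bm{H}))\ge I(\bm{U};\bm{H})$ separately and averages only once at the end, losing a single factor of two and yielding the tighter constant $\frac{1}{2}$, which dominates the stated bound since mutual information is non-negative. You are also right that the conditional-independence assumptions are not load-bearing: even in the paper's route one could simply drop $\frac{1}{2}I((\bm{G},\bm{U});\bm{D})\ge 0$ without invoking $I(\bm{U};\bm{D}\mid\bm{G})=0$; the paper's expansion serves mainly to exhibit $I(\bm{G};\bm{D})$ explicitly as a quantity irrelevant to the optimization variables. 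What the paper's version buys is a reusable two-term lemma and that interpretive decomposition; what yours buys is a sharper constant and a proof with strictly weaker hypotheses.
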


 To prove Theorem \ref{thm:submod}, we first introduce one lemma.
 \begin{lemma}
    \emph{For an arbitrary set of three random variables $R_1$, $R_2$, $R_3$, we have
    \begin{equation}
    I(R_1;R_2,R_3) \geq \frac{1}{2}(I(R_1;R_2) + I(R_1;R_3))
    \end{equation}
    here $I(R_1;R_2,R_3)$ is the mutual information between variable $R_1$ and the joint distribution of $R_2$ and $R_3$ .
}
    \label{thm:l1}
\end{lemma}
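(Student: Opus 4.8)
The plan is to use the chain rule for mutual information together with the non-negativity of conditional mutual information. The chain rule gives, for any ordering of the two ``inner'' variables,
\begin{equation}
I(R_1;R_2,R_3) = I(R_1;R_2) + I(R_1;R_3\mid R_2),
\end{equation}
and symmetrically $I(R_1;R_2,R_3) = I(R_1;R_3) + I(R_1;R_2\mid R_3)$. The key fact I would invoke is that conditional mutual information is always non-negative, i.e.\ $I(R_1;R_3\mid R_2)\geq 0$ and $I(R_1;R_2\mid R_3)\geq 0$, which holds for any jointly distributed random variables.

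First I would apply the chain rule in both orderings to obtain the two identities above. Second, I would add them, yielding
\begin{equation}
2\,I(R_1;R_2,R_3) = I(R_1;R_2) + I(R_1;R_3) + I(R_1;R_3\mid R_2) + I(R_1;R_2\mid R_3).
\end{equation}
Third, I would drop the two conditional terms using their non-negativity, which preserves the inequality, to get $2\,I(R_1;R_2,R_3)\geq I(R_1;R_2)+I(R_1;R_3)$. Dividing by $2$ delivers the claimed bound. (Equivalently, one could note directly that $I(R_1;R_2,R_3)\geq I(R_1;R_2)$ and $I(R_1;R_2,R_3)\geq I(R_1;R_3)$ from the chain rule, then average the two inequalities; this reaches the same conclusion with the explicit factor $\tfrac{1}{2}$.)

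I do not expect a genuine obstacle here, since the argument is a short and standard manipulation. The only point requiring care is the justification that conditioning can only increase the information about $R_1$ contained in the pair $(R_2,R_3)$ relative to a single component, which is precisely the statement that the discarded conditional mutual information terms are non-negative; I would cite this as a basic property of mutual information rather than re-derive it. It is also worth flagging that the bound is tight exactly when $R_3$ (resp.\ $R_2$) carries no additional information about $R_1$ beyond what $R_2$ (resp.\ $R_3$) already provides, i.e.\ when both conditional terms vanish, which explains why the resulting constant in Theorem~\ref{thm:submod} is $\tfrac{1}{4}$ after the lemma is applied twice.
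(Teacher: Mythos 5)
Your proof is correct and follows essentially the same route as the paper: both rest on the chain rule $I(R_1;R_2,R_3)=I(R_1;R_2)+I(R_1;R_3\mid R_2)$ (and its symmetric counterpart) together with the non-negativity of the conditional mutual information, and then average the two resulting inequalities $I(R_1;R_2,R_3)\geq I(R_1;R_2)$ and $I(R_1;R_2,R_3)\geq I(R_1;R_3)$. Your parenthetical variant is precisely the paper's argument, and your remark on tightness is a correct added observation.
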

 To prove Lemma \ref{thm:l1}, we make use of the chain rule for mutual information.
\begin{align}
\begin{split}
   I(R_1;R_2,R_3) &= I(R_1;R_3) + I(R_1;R_2|R_3)\\
   &\geq I(R_1;R_3)
\label{geq6}
\end{split}
\end{align}
The last inequality holds as mutual information is non-negative. Accordingly, we have
\begin{equation}
    I(R_1;R_2,R_3) \geq I(R_1;R_2)
    \label{geq7}
 \end{equation}
Based on Eq. \ref{geq6} and Eq. \ref{geq7}, we complete the proof of Lemma \ref{thm:l1}. 
\eat{
 \begin{lemma}
    \emph{With the Markov assumptions, we have
    \begin{equation}
    I((\bm{G},\bm{U});(\bm{D},\bm{H})) \geq \frac{1}{2}(I(\bm{G};\bm{D}) + I(\bm{G};\bm{H}))
    \end{equation}
}
    \label{thm:l2}
\end{lemma}
\begin{proof}
\begin{align}
\begin{split}
    I((\bm{G},\bm{U});(\bm{D},\bm{H})) &= I(\bm{G};(\bm{D},\bm{H})) + I(\bm{U};(\bm{D},\bm{H})|\bm{G})\\
   &= I(\bm{G};(\bm{D},\bm{H})) + 0\\
   &\geq \frac{1}{2}(I(\bm{G};\bm{D}) + I(\bm{G};\bm{H}))
\label{geq8}
\end{split}
\end{align}
The first equality holds because of the chain rule for mutual information; $I(U;(D,H)|G) = 0$ holds as two sets of nodes are conditional independent if there is no un-blocked path between two sets, or $U$ and $(D,H)$ in our case; the last inequality holds according to Lemma \ref{thm:l1}. Thus, we complete the proof of Lemma \ref{thm:l2}. 
\end{proof}
 \begin{lemma}
    \emph{With the Markov assumption, we have
    \begin{equation}
    I((\bm{G},\bm{U});(\bm{D},\bm{H})) \geq \frac{1}{4}(I(\bm{H};\bm{U}) + I(\bm{G};\bm{H})) + \frac{1}{2}I(\bm{G};\bm{D})
    \end{equation}
}
    \label{thm:l3}
\end{lemma}}

We then prove Theorem \ref{thm:submod},
\begin{align}
\begin{split}
   &I((\bm{G},\bm{U});(\bm{D},\bm{H})) \geq \frac{1}{2}I((\bm{G},\bm{U});\bm{H}) + \frac{1}{2}I((\bm{G},\bm{U});\bm{D})\\
   &\geq \frac{1}{4}I(\bm{U};\bm{H}) + \frac{1}{4}I(\bm{G};\bm{H}) + \frac{1}{2}I((\bm{G},\bm{U});\bm{D})\\
   &= \frac{1}{4}I(\bm{U};\bm{H}) + \frac{1}{4}I(\bm{G};\bm{H}) + \frac{1}{2}I(\bm{G};\bm{D}) + \frac{1}{2}I(\bm{U};\bm{D}|\bm{G})\\
   &= \frac{1}{4}I(\bm{U};\bm{H}) + \frac{1}{4}I(\bm{G};\bm{H}) + \frac{1}{2}I(\bm{G};\bm{D}) + 0\\
   &\geq \frac{1}{4}I(\bm{U};\bm{H}) + \frac{1}{4}I(\bm{G};\bm{H})
\label{geq9}
\end{split}
\end{align}
The first and second inequalities hold according to Lemma \ref{thm:l1}; the third equality holds because of the chain rule for mutual information; $I(U;D|G) = 0$ holds as $U$ and $D$ are conditional independent given $G$ (under the proposed graphical model); the last inequality holds as $I(\bm{G};\bm{D})$ is positive and irrelevant with our optimization variables. Thus, we complete the proof of Theorem \ref{thm:submod}. 

\section{Model}
In this section, we first introduce our method to compute the location, diffusion and graph representations, then describe how to maximize DGDI in details.

\subsection{Representation Model}
The goal of our representation model is to produce fixed-length embedding vectors of locations, diffusions and graph. We first obtain location embeddings $\bm{U}$ from random initializing and then propose a simple yet effective way to compute graph and diffusion representations.

\subsubsection{\textbf{Diffusion representation}}
 To capture the graph characteristics and the diffusion temporal influence, the diffusion representation part takes the geometric graph $G$ and a set of diffusions $D$ as inputs. Accordingly, our model consists of two major components: (1) a Graph Neural Network (GNN) that smooths each location representation according to the graph topology; and (2) a self-attention layer that quantifies the varying effect of previous visited locations.
Our objective is to encode the graph information into location embeddings in GNN and then feed them into the temporal self-attention layer. 

We use GNN model to encourage locations that are close in the geometric graph $G$ to share similar latent representations, which may benefit the predicted task in the absence of explicit location attributes. For example, in the COVID-19 spread, locations that are close in geometric space may be visited by the same cases. We empirically test Graph Convolutional Network (GCN)~\cite{DBLP:conf/iclr/KipfW17}, Graph Attention Network (GAT)~\cite{DBLP:conf/iclr/VelickovicCCRLB18}, and Graph Isomorphism Network (GIN), and find that GCN performs best. Its update rule is: 
\begin{align}\label{eq:gcn}
\bm{Z}^{(l)} = \text{ReLU}(\hat{\bm{A}}\bm{Z}^{(l-1)}\bm{W}^{(l-1)}),
\end{align}
where $\hat{\bm{A}}$ is a normalized adjacency matrix and $\bm{Z}^{(l)}$ ($\bm{Z}^{(0)} = \bm{U}$) denotes the output representations of $l$-th layer GCN.

The latent embedding $\bm{Z}$ is then utilized to compute the diffusion embedding. In the COVID-19 predictions, one observation is that the true spread processes may not strictly follow the sequential assumption, e.g., a location can be visited depending on any of the previous visited locations. To capture such dependency, a self-attention layer followed by a Multi-Layer Perceptron (MLP) is adopted. Specifically, given a diffusion $d_i = \{v_{i_1}, v_{i_2}, \ldots, v_{i_k}\}$, the $k$-th location is represented by $z'_{k}$ (we omit the subscript $i$ for simplicity):
\begin{align}
z'_{k} = z_{k} + \text{PE}(k)
\end{align}
where $\text{PE}(k)$ is the positional-encoding~\cite{DBLP:conf/nips/VaswaniSPUJGKP17} that only depends on the position $k$. The even and odd elements are $\text{sin}(k/10000^{i/L})$ and $\text{cos}(k/10000^{i-1/L})$ respectively ($L$ is the dimension of encoding). Then the diffusion representation $h_i$ for the diffusion $d_i$ can be computed by:
\begin{align}
h_{i} &= \text{MLP}\left(\sum_{j=1}^{k}\alpha_{jk}f_v(z'_{j})\right),\label{eq:atn1}\\
\alpha_{jk} &= \frac{\text{exp}(<f_q (z'_{k}), f_k (z'_{j})>)}{\sum_{j=1}^{k}\text{exp}(<f_q (z'_{k}), f_k (z'_{j})>)},\label{eq:atn2}
\end{align}
where $<\cdot,\cdot>$ denotes the inner product of two vectors. $z'_{k}$ denotes the last location of $d_i$ and attention weight $\alpha_{jk}\in\mathbb{R}$ denotes the $j$-th node's contribution to the final diffusion representation. Following the previous work~\cite{DBLP:conf/nips/VaswaniSPUJGKP17}, we apply three linear transform functions $f_k$, $f_q$, and $f_v$ (i.e., $f(z) = z\bm{W}$) on latent embeddings.

\subsubsection{\textbf{Graph representation}}\label{gr_m}
The representation of the entire graph can be obtained by aggregating the output of GNN, which is termed as graph pooling~\cite{DBLP:conf/nips/YingY0RHL18}. The pooling function transforms arbitrary-sized location embeddings to a fixed-length representation, which can be a simple mean or a more sophisticated graph-level pooling function such as Attention Pooling~\cite{DBLP:conf/www/LiRCMHH19,DBLP:conf/icml/LeeLK19}. Here we use mean pooling for simplicity:
\begin{align}
g = \frac{1}{N}\sum_{i=1}^{N}z_i,
\end{align}
where $g$ denotes the graph representation.

\subsection{Maximization of DGDI}
\begin{figure}[t]
\begin{center}
\includegraphics [width=0.46\textwidth]{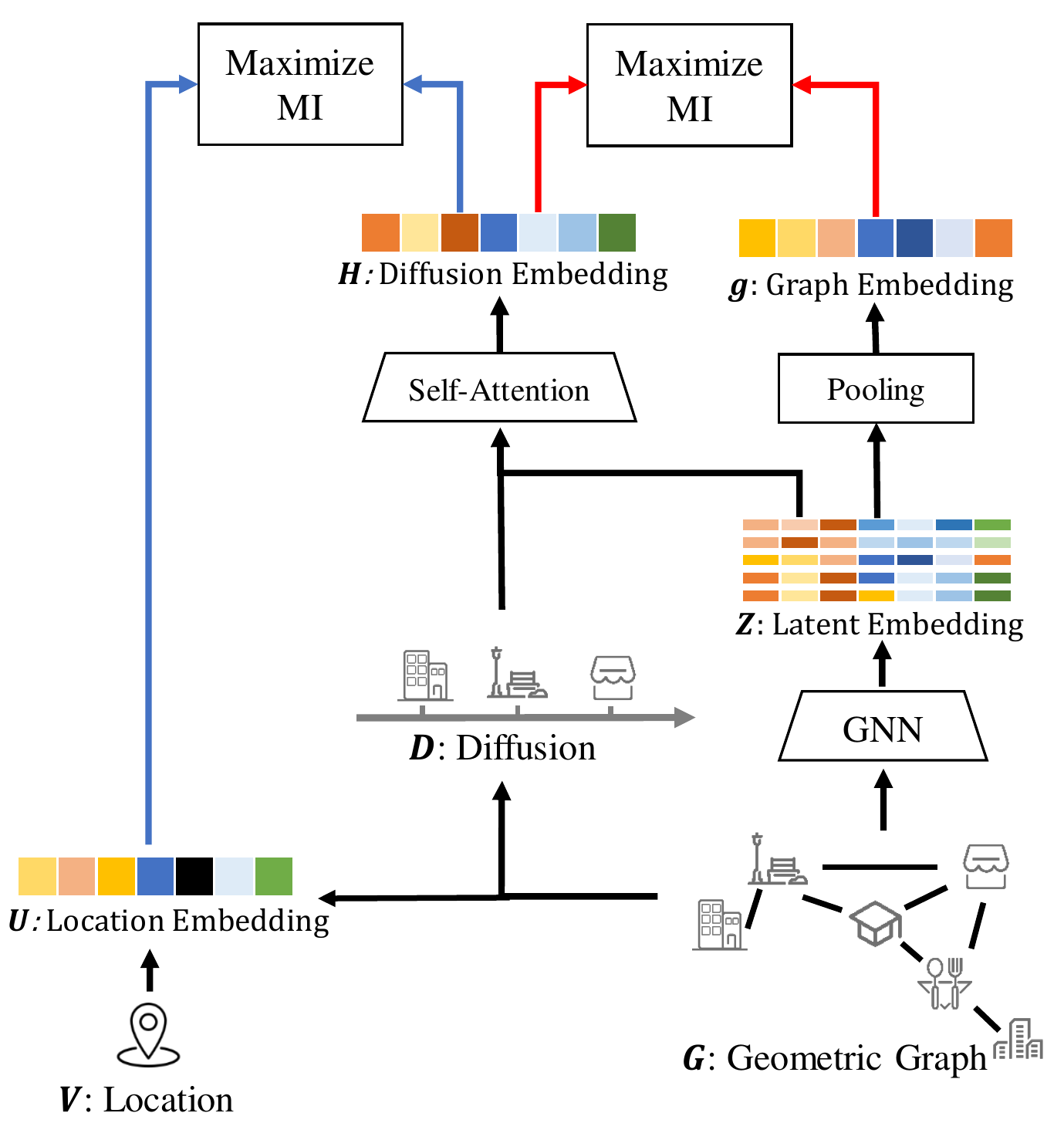}
\end{center}
\caption{Overview of the proposed DGDI. The graph embedding is obtained via pooling the latent embeddings, and the diffusion embedding is calculated via self-attention with the input of location embeddings. Finally, we compute the MI between the location embedding and diffusion embedding, and the MI between the diffusion and graph embedding.}
\label{fig:gcmi}
\end{figure} 

\paragraph{\textbf{Training objective}} 
According to Theorem~\ref{thm:submod}, DGDI is lower bounded by two univariate mutual information: mutual information between graph input and diffusion representations and that between diffusion representations and location representations. 
Accordingly, our training objective becomes:
\begin{align}
\mathcal{L} = \lambda_1 \mathcal{L}_{G} + \lambda_2 \mathcal{L}_{U}
\end{align}
Following existing work~\cite{DBLP:conf/www/PengHLZRXH20}, $\lambda_1$ and $\lambda_2$ are set to tunable parameters for better performance. $\mathcal{L}_{G}$ and $\mathcal{L}_{U}$ are the training losses for $I(\bm{G};\bm{H})$ and $I(\bm{H};\bm{U})$ maximization respectively. Figure~\ref{fig:gcmi} illustrates the overall framework of DGDI. 
During our experiments, we found that deep graph infomax (DGI)~\cite{DBLP:conf/iclr/VelickovicFHLBH19} and InfoNCE~\cite{DBLP:journals/corr/abs-1807-03748} are simple but effective methods to estimate $I(\bm{G};\bm{H})$ and $I(\bm{H};\bm{U})$. They share a similar idea which encourages the consistency between the representations of positive pairs and the divergence between that of negative pairs. Although there exist other estimators (e.g., GMI~\cite{DBLP:conf/www/PengHLZRXH20} or InfoGraph~\cite{DBLP:conf/iclr/SunHV020}), we empirically show that DGI and InfoNCE have already achieve state-of-the-art performance in Section~\ref{sec:exp}. Thus, we leave the testing of other choices for future work.

\paragraph{\textbf{$I(\bm{G};\bm{H})$ computation}}
MI between global graph embedding $g$ and representations of local parts are maximized under the framework of DGI~\cite{DBLP:conf/iclr/VelickovicFHLBH19}. Since diffusions are assumed to be sampled from the graph, we treat diffusions as the local parts and maximize MI between global graph embedding $g$ and diffusion embedding $h$. A negative sampling strategy is then leveraged to generate negative graph $\hat{g}$. More specifically, we randomly shuffle the location embedding matrix and get $\hat{\bm{U}}$; we let $\hat{\bm{U}}$ go through the GNN module and  the graph representation module and get $\hat{g}$. 
A discriminator is then utilized to distinguish the diffusion representation and graph representations. Formally, the loss is defined as follows:
\begin{align}
\mathcal{L}_G = \sum_{h\in \bm{H}}\mathbb{E}[\text{log}\ f_d(h, g)] + \mathbb{E}[\text{log}\ (1 - f_d(h, \hat{g}))],
\end{align}
where $f_d$ denotes the inner-product discriminator:
\begin{align}
f_d(h, g) = \sigma (<h, g>),
\end{align}
where $\sigma$ denotes the sigmoid activation function and $f_d(h, \hat{g})$ is calculated in the same way.

\paragraph{\textbf{$I(\bm{H};\bm{U})$ computation}}
$I(\bm{H};\bm{U})$ is maximized between diffusion representation $H$ and location representation $U$. We use InfoNCE~\cite{DBLP:journals/corr/abs-1807-03748} to maximize the MI of these two representations. In terms of sampling strategies, a diffusion and its next visited location are considered as a positive pair while a diffusion and uninfected locations are as negative pairs. To summarize, $\mathcal{L}_{U}$ is computed by:
\begin{align}
\mathcal{L}_{U} =-\sum_{i=1}^{M}\sum_{k=1}^{K}\text{log} \frac{\text{exp}(<h_{i_k}, u> / \tau)}{\sum_{u'\in \bm{U}}\text{exp}(<h_{i_k}, u'> / \tau)},
\end{align}
where $h_{i_k}$ denotes the diffusion representation of $d_i$'s sub-diffusion ending at the position $k$ and $\tau$ is the hyper-parameter, known as the temperature. Inner-product is employed as the similarity function of InfoNCE.

For model inference, we formulate the predicted task as an information retrieval problem. Given an input diffusion, our model ranks a location list according to the possibility of being visited. And the possibility equals to the inner product of the diffusion representation and the location representations.

\subsection{Time Complexity}
The cost per training iteration of DGDI contains the computation of GCN (i.e., Eq.~\ref{eq:gcn}) and the self-attention layer (i.e., Eq.~\ref{eq:atn1} and Eq.~\ref{eq:atn2}). The graph convolution operations take $O(|\bm{E}|F^{2})$~\cite{DBLP:conf/iclr/KipfW17} where $|\bm{E}|$ denotes the number of edges and $F$ is dimension of latent embeddings. The time complexity of the self-attention layer is $O(|\bm{D}|K^{2}F)$ where $|\bm{D}|$ denote the size of observed cascades and $K$ is the length of the longest cascade. Thus, the overall complexity of DGDI is $O(|\bm{E}|F^{2} + |\bm{D}|K^{2}F)$. We compare DGDI with Inf-VAE~\cite{DBLP:conf/wsdm/SankarZK020} and one of our baselines SNIDSA~\cite{DBLP:conf/cikm/WangCL18}. The core operations of Inf-VAE are GCN with inner-product and a co-attention layer which take $O(|\bm{E}|F^{2} + |\bm{V}|^{2}F + |\bm{D}|K^{2}F)$. SNIDSA is a recurrent model with a structural attention network whose complexity is $O(|\bm{D}|KF^{2}+|\bm{V}|F^{2} + |\bm{E}|F)$. Since Inf-VAE requires the computation of each node pair, its time complexity is highest among these three models.

\begin{table}[t]
\small
\caption{Dataset statistics.} \label{table:dataset}
\setlength{\tabcolsep}{2.0mm}
\centering
\scalebox{0.95}{
\begin{tabular}{ccccc}
\toprule
\textbf{Dataset} & \textbf{\# Nodes} & \textbf{\# Links} & \textbf{\# Diffusions}\\
\midrule
\textbf{COVID-HK} & 3,274 & 603,667 & 2,536 \\
\textbf{COVID-MLC} & 4,091 & 58,079 & 1,887 \\
\bottomrule
\end{tabular}}
\end{table}

\section{Experiments}
\subsection{Experimental Settings}
\paragraph{\textbf{Datasets}} Two public available datasets are employed to evaluate our proposed model in COVID-19 transmission scenarios: COVID-HK and COVID-MLC. To better serve our problem, imported cases are excluded and we retain cases who have at least 2 locations. In terms of the geometric graph, two locations are connected if their distance is less than 3 kilometers.
\eat{A detailed dataset description can be found in Appendix.} \textbf{Note that all data are published by the government and do not contain sensitive information (i.e., anonymous).}
\begin{itemize}
    \item \textbf{COVID-HK.} The COVID-HK dataset is released by the Hong Kong government  \footnote{https://chp-dashboard.geodata.gov.hk/covid-19/en.html}, which records locations visited by the COVID-19 cases from Jan 28, 2020 to Sep 1, 2021. 
    \item \textbf{COVID-MLC.} The COVID-MLC dataset is provided by Beijing Advanced Innovation Center for Big Data and Brain Computing\footnote{https://github.com/BDBC-KG-NLP/COVID-19-tracker}, which records locations from Jan 1, 2020 to  March 22, 2020 and involves twelve provinces of Mainland, China.
\end{itemize}

\begin{table*}[t]
\caption{MAP@K and Recall@K comparison of different methods on two datasets: our model (denoted by bold) significantly outperforms the strongest baseline.}\label{table:overall}
\setlength{\tabcolsep}{2.0mm}
\centering
\scalebox{0.8}{
\begin{tabular}{@{}p{1.9cm}<{\centering}|p{1.1cm}<{\centering}p{1.1cm}<{\centering}p{1.1cm}<{\centering}p{1.1cm}<{\centering}p{1.1cm}<{\centering}p{1.4cm}<{\centering}|p{1.1cm}<{\centering}p{1.1cm}<{\centering}p{1.1cm}<{\centering}p{1.1cm}<{\centering}p{1.1cm}<{\centering}p{1.4cm}<{\centering}@{}}
\toprule
\textbf{Metric (\%)} & \multicolumn{6}{c|}{\textbf{Recall}} & \multicolumn{6}{c}{\textbf{MAP}} \\
\midrule
\multirow{2}{*}{\textbf{Model}}  & \multicolumn{3}{c}{\textbf{Covid-HK}} & \multicolumn{3}{c|}{\textbf{Covid-MLC}} & \multicolumn{3}{c}{\textbf{Covid-HK}} & \multicolumn{3}{c}{\textbf{Covid-MLC}} \\ 
& \textbf{@3} & \textbf{@5} & \textbf{@10}  & \textbf{@3} & \textbf{@5} & \textbf{@10}  & \textbf{@3} & \textbf{@5} & \textbf{@10} & \textbf{@3} & \textbf{@5} & \textbf{@10}\\
\midrule
\textbf{FMC} & 1.10$\pm$0.0 & 1.10$\pm$0.0 & 1.10$\pm$0.0 & 4.40$\pm$0.0 & 4.63$\pm$0.0 & 5.39$\pm$0.0 & 0.79$\pm$0.0 & 0.79$\pm$0.0 & 0.79$\pm$0.0 & 3.78$\pm$0.0 & 3.83$\pm$0.0 & 3.93$\pm$0.0 \\
\textbf{LSTM} & 0.73$\pm$0.2 & 0.88$\pm$0.3 & 1.18$\pm$0.2 & 2.73$\pm$0.2 & 3.24$\pm$0.2 & 4.03$\pm$0.2 & 0.56$\pm$0.2 & 0.59$\pm$0.2 & 0.63$\pm$0.2 & 2.17$\pm$0.2 & 2.26$\pm$0.2 &  2.37$\pm$0.1  \\
\textbf{DeepMove} & 0.94$\pm$0.2 & 1.15$\pm$0.2 & 1.39$\pm$0.2 & 3.29$\pm$0.2 & 3.64$\pm$0.2 & 4.55$\pm$0.2 & 0.77$\pm$0.2 & 0.82$\pm$0.1 & 0.85$\pm$0.1 & 2.82$\pm$0.2 & 2.90$\pm$0.2 & 3.02$\pm$0.2  \\
\textbf{GCN} & 1.77$\pm$0.2 & 2.17$\pm$0.3 & 3.15$\pm$0.3  & 4.20$\pm$0.5 & 4.83$\pm$0.5 & 6.17$\pm$0.6 &  1.19$\pm$0.2 & 1.28$\pm$0.2 & 1.40$\pm$0.2 & 3.10$\pm$0.3 & 3.25$\pm$0.3 & 3.42$\pm$0.2  \\
\textbf{GIN} & 1.92$\pm$0.2 & 2.46$\pm$0.4 & 3.42$\pm$0.9 & 4.95$\pm$0.2 & 6.28$\pm$0.4 & 8.15$\pm$0.3 & 1.26$\pm$0.2 & 1.39$\pm$0.2 & 1.51$\pm$0.2 & 4.12$\pm$0.2 & 4.42$\pm$0.2 & 4.67$\pm$0.2  \\
\midrule
\textbf{Topo-LSTM} & 1.78$\pm$0.2 & 2.25$\pm$0.3 & 3.30$\pm$0.4 & 4.66$\pm$0.1 & 5.58$\pm$0.2 & 7.17$\pm$0.1 &  1.28$\pm$0.1 & 1.40$\pm$0.1 & 1.54$\pm$0.1 & 3.17$\pm$0.2 & 3.38$\pm$0.2 & 3.58$\pm$0.2   \\
\textbf{SNIDSA} & 1.44$\pm$0.2 & 2.26$\pm$0.1 & 3.40$\pm$0.1 & 2.11$\pm$0.3 & 5.36$\pm$0.2 & 7.13$\pm$0.1 &  1.10$\pm$0.2 & 1.29$\pm$0.2 & 1.43$\pm$0.2  & 1.53$\pm$0.2 & 2.25$\pm$0.1 & 2.51$\pm$0.1   \\
\textbf{FOREST} & 1.77$\pm$0.3 & 2.53$\pm$0.4 & 3.50$\pm$0.3 & 4.55$\pm$0.2 & 5.32$\pm$0.4 & 6.48$\pm$0.4 &  1.14$\pm$0.1 & 1.32$\pm$0.2 & 1.45$\pm$0.2  & 3.88$\pm$0.3 & 4.06$\pm$0.3 & 4.21$\pm$0.3 \\
\textbf{Inf-VAE} & 2.03$\pm$0.2 & 2.54$\pm$0.2 & 3.73$\pm$0.2 & 6.39$\pm$0.2 & 7.33$\pm$0.1 & 8.64$\pm$0.3 & 1.56$\pm$0.1  & 1.66$\pm$0.1 & 1.80$\pm$0.1 & 5.02$\pm$0.2 & 5.20$\pm$0.2 & 5.44$\pm$0.2 \\
\midrule
\textbf{DGDI} & \textbf{3.76$\pm$0.3} & \textbf{4.89$\pm$0.4} & \textbf{6.95$\pm$0.9} & \textbf{7.04$\pm$0.3} & \textbf{8.88$\pm$1.1} & \textbf{12.19$\pm$1.1} & \textbf{2.74$\pm$0.2} & \textbf{2.99$\pm$0.2} & \textbf{3.25$\pm$0.2}  & \textbf{5.52$\pm$0.2} & \textbf{5.94$\pm$0.3} & \textbf{6.37$\pm$0.3}\\
\bottomrule
\end{tabular}}
\end{table*}

\paragraph{\textbf{Baselines}} We compared our proposed model with several methods. 
The first group focuses on diffusion modeling and contains FMC~\cite{DBLP:conf/gis/ZhangCL14}, LSTM~\cite{DBLP:journals/neco/HochreiterS97}, GRU~\cite{DBLP:conf/emnlp/ChoMGBBSB14} and DeepMove~\cite{DBLP:conf/www/FengLZSMGJ18}.
The second group focuses on geometric graph modeling and includes GCN~\cite{DBLP:conf/iclr/KipfW17} and GIN ~\cite{DBLP:conf/iclr/XuHLJ19}. It makes use of geometric graph information and represents diffusions by their last location representations.  The third group uses both geometric graph and diffusion information and includes Topo-LSTM~\cite{DBLP:conf/icdm/WangZLC17}, SNIDSA~\cite{DBLP:conf/cikm/WangCL18}, FOREST~\cite{DBLP:conf/ijcai/YangTSC019} and Inf-VAE~\cite{DBLP:conf/wsdm/SankarZK020}. These methods are the state-of-the-art in diffusion prediction tasks. 

\paragraph{Implementation details}
\eat{For GCN and GIN, we use the implementation of DGL\footnote{https://www.dgl.ai/}. }For Topo-LSTM\footnote{https://github.com/vwz/topolstm}, DeepMove\footnote{https://github.com/vonfeng/DeepMove}, SNIDSA\footnote{https://github.com/zhitao-wang/Sequential-Neural-Information-Diffusion-Model-with-Structure-Attention} and Inf-VAE\footnote{https://github.com/aravindsankar28/Inf-VAE}, we use the code provided by authors. We implement FOREST and our model by PyTorch. 

\paragraph{\textbf{Evaluation metrics}}
According to the timestamp, the dataset is split to 70\%, 10\% and 20\% for training, validation and testing. As stated in Section \ref{pdd}, we consider our prediction task as a ranking task. Specifically, the unvisited locations are ranked based on their predicted visit probabilities. Two widely used ranking metrics are adopted: Recall@K which denotes the fraction of infected locations among the top-k predicted locations, and MAP@K which jointly measures the existence and position of the target location in the rank list. 

\paragraph{\textbf{Setup}}
For fair comparison, all models are trained by Adam optimizer~\cite{DBLP:journals/corr/KingmaB14} with a learning rate of \{0.001, 0.0005, 0.0001\} and mini-batch size of 16. The dimension for representations and hidden states in all models are 32. For DGDI, we set $\lambda_2$ to 1 and tune $\lambda_1$ and $\tau$ within the ranges of \{0.1, 0.2, ..., 0.5\} and \{0.5, 0.6, ..., 1\}. We use GCN as our graph model and its layer number is searched within 2. For FOREST, we use recommended settings in their papers~\cite{DBLP:conf/ijcai/YangTSC019}. We run each method 5 times and report its average accuracy.

\subsection{\textbf{Overall Performance Comparison}}\label{sec:exp}
Table~\ref{table:overall} lists the performance for all methods where K is set to 3, 5, and 10. We can observe:
\begin{itemize}[leftmargin=*]
    \item Sequence Models that only utilize diffusions (i.e., FMC, LSTM, and DeepMove) perform worst in most cases, indicating that simply modeling the diffusion information is not sufficient. Although graph models (i.e., GCN and GIN) ignore previous visited locations to make predictions, they outperforms LSTM and DeepMove under all circumstances. One possibility is that most people only have a limited circle of activities (e.g., near their home or workplace) . Thus, integrating the geometric information allow the model to provide a better predictions.
    \item The integration of sequence models and the graph structure results in performance improvements. Topo-LSTM, SNIDSA, FOREST, Inf-VAE, and DGDI beat models based on pure sequence (i.e., FMC, LSTM, and DeepMove) in most cases, indicating the importance of jointly modeling the diffusion and graph information.
    \item DGDI significantly outperforms all the baselines on both MAP and Recall, which indicates that more to-be-visited locations are found and their ranks are higher in the predicted list. For example, our model beats the second best model Inf-VAE by 1.18 w.r.t. MAP@3 on COVID-HK and 0.93 w.r.t. MAP@10 on COVID-MLC. These improvements are owed to both the model architecture and the contrastive loss.
\end{itemize}

\begin{figure}[t]
\centering
\includegraphics[width=0.23\textwidth]{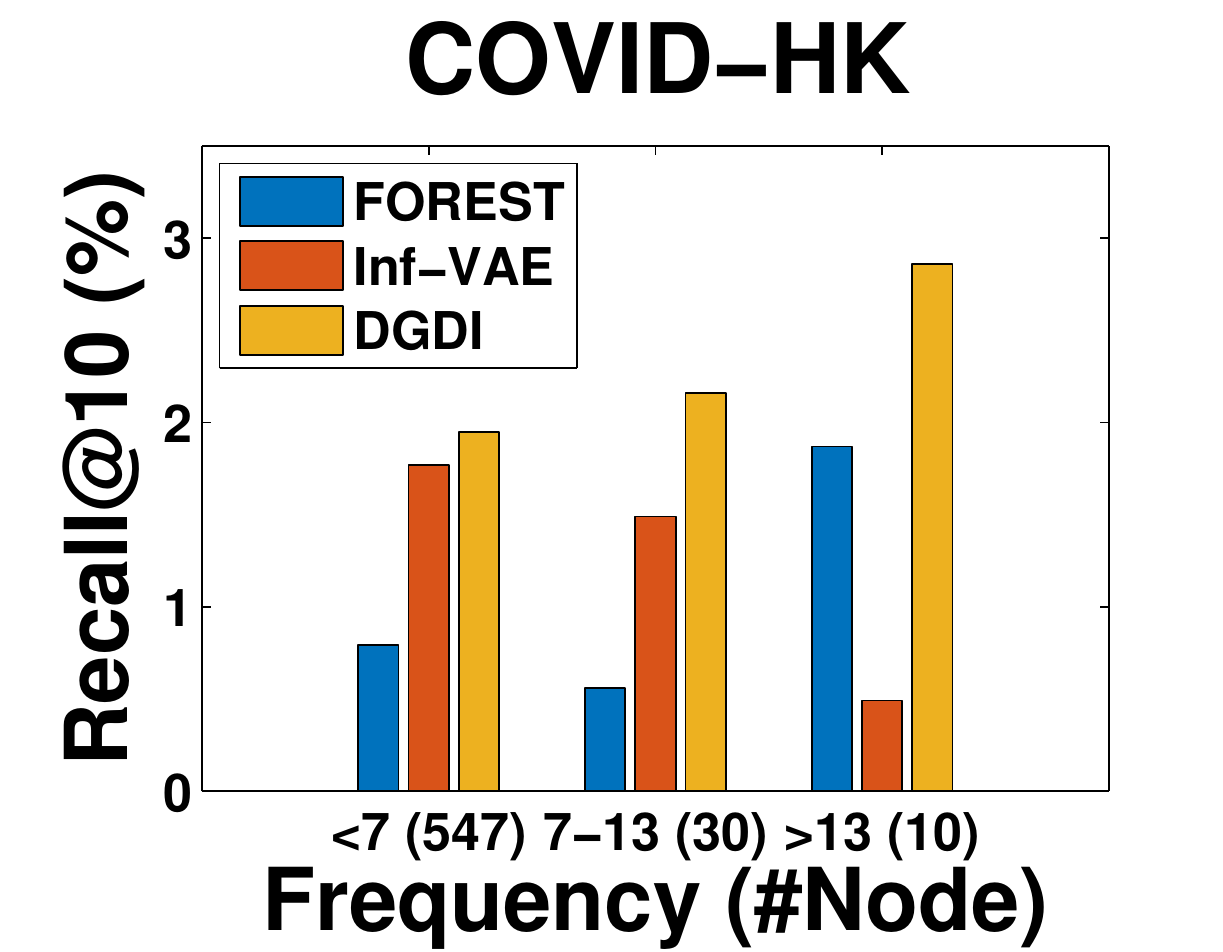}
\includegraphics[width=0.23\textwidth]{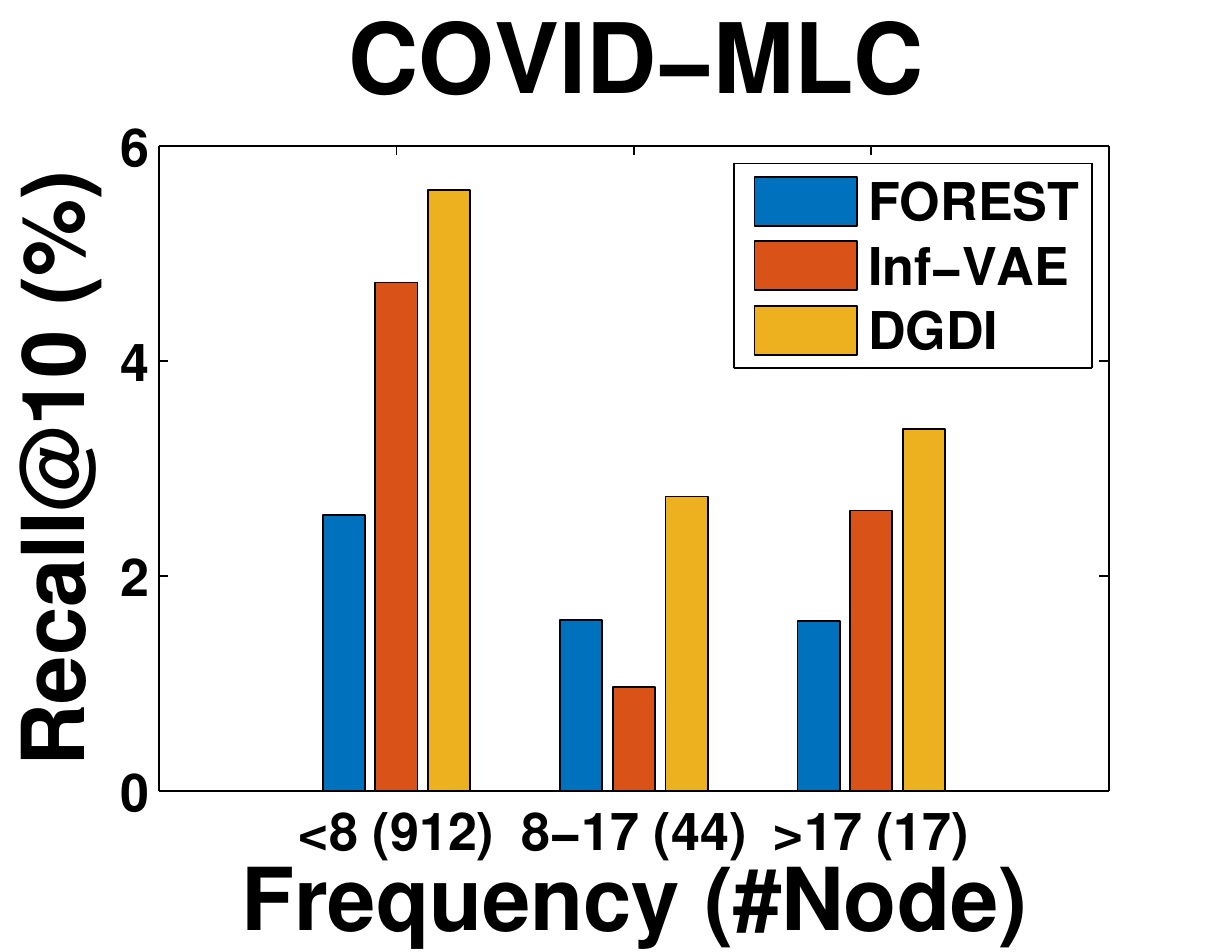} 
\caption{Recall@10 of COVID-HK and COVID-MLC on the frequency of location appearances. DGDI outperforms baselines on all groups.}\label{fig:long}
\end{figure}

\paragraph{\textbf{Comparison w.r.t. data sparsity}}
To investigate the model performance w.r.t. data sparsity of location appearance, we divide the frequency of location appearance into three groups which are $\{<7 (547), 7-13 (30), >13 (10)\}$ on COVID-HK and $\{<8 (912), 8-17 (44), >17 (17)\}$ on COVID-MLC, e.g., $<7 (547)$ indicates that there are 547 locations appears less than 7 times in training diffusions. We can see that most locations only have limited appearances and only a small number of locations appear frequently. The recall@10 of DGDI and the two strongest baselines (i.e., Inf-VAE and FOREST) are displayed in Figure~\ref{fig:long}. Specifically, the results of each group indicate how much they contribute to the overall results of the entire test set (i.e., the \textbf{sum} of these three results is equal to the results of the entire test set). As the figure shows, the model performs significantly better on head nodes than tail nodes since the performance of only a small number of head nodes is comparable to that of tail nodes. Moreover, we can observe that DGDI outperforms Inf-VAE and FOREST in all groups, demonstrating that DGDI is better than other models even under data sparsity.

\begin{table}[t]
\small
\caption{The MAP@10 and Recall@10 of ablation study \eat{on model design}. The results are average over 5 runs.} \label{table:abmap}
\setlength{\tabcolsep}{2.0mm}
\centering
\scalebox{0.9}{
\begin{tabular}{c|p{1.2cm}<{\centering}p{1.2cm}<{\centering}|p{1.3cm}<{\centering}p{1.2cm}<{\centering}}
\toprule
\multirow{2}{*}{\textbf{Method}} & \multicolumn{2}{c|}{\textbf{COVID-HK}} &  \multicolumn{2}{c}{\textbf{COVID-MLC}}  \\
& \textbf{Recall@10} & \textbf{MAP@10} & \textbf{Recall@10} & \textbf{MAP@10} \\
\midrule
\textbf{Remove GNN} & 2.28$\pm$0.3 & 1.18$\pm$0.1 & 8.79$\pm$0.4 & 4.58$\pm$0.1\\
$\bm{\lambda_1=0}$ & 6.56$\pm$0.5 & 2.75$\pm$0.3 & 11.12$\pm$1.1 & 5.82$\pm$0.5 \\
\textbf{Default} & 6.95$\pm$0.9 & 3.25$\pm$0.2 & 12.19$\pm$1.1 & 6.37$\pm$0.3\\
\bottomrule
\end{tabular}}
\end{table}

\subsection{\textbf{Model Ablation Study}}\label{sec:ab}
To investigate the contribution of our model design choices, the ablation study is conducted including:
\begin{enumerate}[leftmargin=*]
    \item \textbf{Remove GNN}: This method removes the GNN of the model design. Since $I(\bm{G};\bm{H})$ depends on GNN to obtain graph representations. If the GNN is removed, $I(\bm{G};\bm{H})$ is removed as well. Thus, this method only uses self-attention to derive diffusion representations.
    \item $\bm{\lambda_1 =0}$: Only use $\mathcal{L}_{U}$ to train the model and GNN is used to derive diffusion representations.
\end{enumerate}
 Table~\ref{table:abmap} shows MAP@10 and Recall@10 of our model and its variants. The last row denotes the performance of our model in default settings.
 
 Due to the lack of graph information, the performances decrease on both datasets by a large margin after removing the GNN, which reflects the importance of geometric information when predicting COVID-19 diffusion. Maximizing $\mathcal{L}_G$ enforces diffusion embeddings to encode the geometric information. When setting $\lambda_1 =0$, the performance drops in all cases, which demonstrates the effectiveness of $\mathcal{L}_G$. Specifically, the relative improvement w.r.t. MAP@10 on COVID-HK and COVID-MLC is 18.18\% and 9.45\%, demonstrating the effectiveness of DGDI.

\eat{
\subsection{Case Study}
To better understand the effect of contrastive loss $\mathcal{L}_G$, a case study is conducted where the top-ranking locations of our model and its ablation model (i.e., $\bm{\lambda_1 = 0}$) are plotted in Figure~\ref{fig:case}. We use black points A and B to denote the observed visited locations and the goal is to predict the next location C. The red and blue points are the prediction of our model and the ablation model. We can observe that although both models successfully predict the next location, all the predicted locations of our model are near the visited points, while only three of the ablation model's predictions are close. The reason is that maximizing $\mathcal{L}_G$ forces the model to search places that are geometrically near the previously visited locations.

\begin{figure}[t]
\centering
\includegraphics[width=0.48\textwidth]{CaseStudy.pdf}
\caption{The predicted location distribution on the map (micro view). Our model can provide a more precise region containing the target location while predictions of the ablation model are more diverse and random.}\label{fig:case}
\end{figure}}

\section{Conclusion}
In this work, we argue that human mobility modeling plays an important role during the NPI design and propose to predict case locations in the COVID-19 transmission. We present a novel mutual information maximization framework, named DGDI, to jointly learn the high-quality location, geometric graph, and diffusion representations. A lower bound of DGDI consists of the sum of two univariate mutual information is derived to optimize the model. Compared with other related works, DGDI can better handle data sparsity of diffusion and location appearance. Two COVID-19 datasets are proposed to facilitate the research of COVID-19 mobility modeling. Experimental results on these two real-world datasets show that DGDI outperforms other competitors by a significant margin on both Recall and MAP. 

\section*{Acknowledgements}
The research of Li was supported by NSFC Grant No. 62206067, Tencent AI Lab Rhino-Bird Focused Research Program RBFR2022008 and Guangzhou-HKUST(GZ) Joint Funding Scheme. The research of Tsung was supported by the Hong Kong RGC General Research Funds 16216119 and Foshan HKUST Projects FSUST20-FYTRI03B.
\bibliography{aaai23}
\eat{
\clearpage
\appendix
\section{Experiment Details}
\paragraph{Dataset} Two datasets are employed to evaluate our models:
\begin{itemize}
    \item \textbf{COVID-HK.} The COVID-HK dataset is released by the Hong Kong government  \footnote{https://chp-dashboard.geodata.gov.hk/covid-19/en.html}, which records locations visited by the COVID-19 cases from Jan 28, 2020 to Sep 1, 2021. 
    \item \textbf{COVID-MLC.} The COVID-MLC dataset is provided by Beijing Advanced Innovation Center for Big Data and Brain Computing\footnote{https://github.com/BDBC-KG-NLP/COVID-19-tracker}, which records locations from Jan 1, 2020 to  March 22, 2020 and involves twelve provinces of Mainland, China.
\end{itemize}
Besides the COVID-19 datasets, we also conduct experiments on two traditional diffusion prediction datasets:
\begin{itemize}
    \item \textbf{Twitter}~\cite{DBLP:conf/ijcai/YangTSC019}. It records the diffusion of tweets containing URLs in October 2010. Each URL is regarded as an information item spreading among users. The social network is considered as the context graph and consists of follower links. We retain users who have at least 5 tweets.
    \item \textbf{Douban}~\cite{DBLP:conf/ijcai/YangTSC019}. It is a Chinese social platform in which users can update their reading status and follow other users. Each book is considered as an information item and a sequence of users engaged with the book is treated as a cascade. We retain users who read at least 5 books.
\end{itemize}

\begin{table}[h]
\small
\caption{Dataset statistics.} \label{table:dataset}
\setlength{\tabcolsep}{2.0mm}
\centering
\scalebox{0.95}{
\begin{tabular}{ccccc}
\toprule
\textbf{Dataset} & \textbf{\# Nodes} & \textbf{\# Links} & \textbf{\# Diffusions}\\
\midrule
\textbf{COVID-HK} & 3,274 & 603,667 & 2,536 \\
\textbf{COVID-MLC} & 4,091 & 58,079 & 1,887 \\
\textbf{Twitter} & 10,172 &  233,288 & 3,354  \\
\textbf{Douban} & 8,800 & 171,397 & 10,661 \\
\bottomrule
\end{tabular}}
\end{table}

\paragraph{Baseline}
\begin{itemize}[leftmargin=*]
    \item \textbf{FMC}~\cite{DBLP:conf/gis/ZhangCL14}: a location prediction model based on the first-order Markov method.
    \item \textbf{LSTM}~\cite{DBLP:journals/neco/HochreiterS97}: a vanilla recurrent neural network architecture with a gating mechanism.
    \item \textbf{DeepMove}~\cite{DBLP:conf/www/FengLZSMGJ18}: a GRU-based location prediction model with an attention module. The user modeling is removed since diffusions of our datasets are anonymous.
    \item \textbf{GCN}~\cite{DBLP:conf/iclr/KipfW17}: a general graph neural network in which each layer consists of a linear feature transformation and average aggregation function. 
    \item \textbf{GIN}~\cite{DBLP:conf/iclr/XuHLJ19}: a generalization of GCN that replaces average aggregation function with summation function. 
    \item \textbf{Topo-LSTM}~\cite{DBLP:conf/icdm/WangZLC17}: A directed acyclic graph based recurrent model that exploits local diffusion structure.
    \item \textbf{SNIDSA}~\cite{DBLP:conf/cikm/WangCL18}: a recurrent neural network that utilizes structure attention to model the graph characteristics.
    \item \textbf{FOREST}~\cite{DBLP:conf/ijcai/YangTSC019}: a GRU-based model that incorporates graph neural network to extract the structural context. We use the microscopic diffusion prediction model of FOREST.
    \item \textbf{Inf-VAE}~\cite{DBLP:conf/wsdm/SankarZK020}: a variational autoencoder framework that consists of a graph autoencoder and a co-attention network to jointly model graph relations and diffusion information.
\end{itemize}

\paragraph{Implementation details}
For GCN and GIN, we use the implementation of DGL\footnote{https://www.dgl.ai/}. For Topo-LSTM\footnote{https://github.com/vwz/topolstm}, DeepMove\footnote{https://github.com/vonfeng/DeepMove}, SNIDSA\footnote{https://github.com/zhitao-wang/Sequential-Neural-Information-Diffusion-Model-with-Structure-Attention} and Inf-VAE\footnote{https://github.com/aravindsankar28/Inf-VAE}, we use the code provided by authors. We implement FOREST and our model by PyTorch. 

\section{Additional Experiment}
\paragraph{Sensitivity analysis}
To investigate the impact of hyperparameters $\lambda_1$ and $\tau$, we vary them and plot the performance of MAP@10 on COVID-HK and COVID-MLC in Figure~\ref{fig:sen}. As can be observed, the results of COVID-HK are more sensitive to $\lambda_1$ than $\tau$ and a large $\lambda_1$ deteriorates the performance. In contrast, the performance of COVID-MLC is more stable when varying $\lambda_1$ and $\tau$. The best value of $\lambda_1$ and $\tau$ is around 0.2 and 0.9 respectively. 

\paragraph{Performance comparison on Twitter and Douban} 
To better validate the effectiveness of DGDI, we conduct experiments on two traditional datasets as well. Table~\ref{table:add} list the performance for all methods. We can observe that DGDI significantly outperforms all the baselines on both MAP and Recall. For example, our model beats the second best model SNIDSA by 0.03 w.r.t. MAP@10 on Twitter. These improvements demonstrate that DGDI can not only perform well on COVID-19 mobility modeling but also work on other tasks.

\begin{figure*}[t]
\centering
\includegraphics[width=0.4\textwidth]{lambda}
\includegraphics[width=0.4\textwidth]{tau}
\caption{MAP@10 when varying $\lambda_1$ and $\tau$ on COVID-HK and COVID-MLC.}\label{fig:sen}
\end{figure*}

\begin{table*}[t]
\small
\caption{MAP@K and Recall@K comparison of different methods on Twitter and Douban. } \label{table:add}
\setlength{\tabcolsep}{2.0mm}
\centering
\begin{tabular}{ccccccc|cccccc}
\toprule
\textbf{Metric(\%)} & \multicolumn{6}{c|}{\textbf{MAP}} & \multicolumn{6}{c}{\textbf{Recall}} \\
\midrule
\multirow{2}{*}{\textbf{Model}}  & \multicolumn{3}{c}{\textbf{Twitter}} & \multicolumn{3}{c|}{\textbf{Douban}} & \multicolumn{3}{c}{\textbf{Twitter}} & \multicolumn{3}{c}{\textbf{Douban}}\\ 
 &\textbf{@10} & \textbf{@50} & \textbf{@100} & \textbf{@10} & \textbf{@50} & \textbf{@100}  & \textbf{@10} & \textbf{@50} & \textbf{@100} & \textbf{@10} & \textbf{@50} & \textbf{@100}\\
\midrule
\textbf{LSTM} & 6.11 & 6.23 & 6.25 & 0.11  & 0.17 & 0.21    & 8.25 & 11.06 & 13.04 & 0.59 & 2.39 & 5.03   \\
\textbf{GCN}  & 5.08 & 5.26 & 5.29 & 0.15  & 0.26 &  0.29  & 9.38 & 13.14 & 15.46 &  0.57 & 3.26 & 5.34    \\
\textbf{GIN}  & 3.48 & 3.77 & 3.82 & 0.17  & 0.27 & 0.30 &  6.64 & 12.70 & 16.09  & 0.40  & 2.93 &  5.22 \\
\textbf{SNIDSA}  & 11.31 & 11.80 & 11.90 & 0.61 & 0.76  & 0.80  & 17.52 & 28.29 & 34.74 & 1.67  & 5.12  &  8.40  \\
\textbf{FOREST}  & 8.46 & 8.73 & 8.78 & 0.21 & 0.31 & 0.34 & 12.73 & 18.55 & 22.06  & 0.72  & 3.00 &  5.53\\
\textbf{Inf-VAE} & - & - & - & - & - & -  & - & - & - & - & - & -\\
\midrule
\textbf{DGDI}& \textbf{14.54} & \textbf{15.14} & \textbf{15.26} & \textbf{0.90} & \textbf{1.09} & \textbf{1.14}  & \textbf{24.70} & \textbf{37.77} & \textbf{46.20} & \textbf{2.40} & \textbf{6.82} &  \textbf{10.51} \\
\bottomrule
\multicolumn{10}{l}{\footnotesize *The time complexity of Inf-VAE on Twitter and Douban is too large to run (more than 1day).}
\end{tabular}
\end{table*}}


\end{document}